\title{Socratic Learning: Augmenting Generative Models to Incorporate Latent Subsets in Training Data}
\date{}
\author[ \hspace{-1ex}]{Paroma Varma}
\author[ \hspace{-1ex}]{Bryan He}
\author[ \hspace{-1ex}]{Dan Iter}
\author[ \hspace{-1ex}]{Peng Xu}
\author[*]{Rose Yu}
\author[$\dagger$]{Christopher De Sa}
\author[ \hspace{-1ex}]{Christopher R\'{e}}
\affil[ \hspace{-1ex}]{Stanford University, \textsuperscript{*}California Institute of Technology, \textsuperscript{$\dagger$}{Cornell University}
\affil[ \hspace{-1ex}]{\small \texttt{\{paroma,bryanhe,daniter,pengxu}@stanford.edu, qiyu@usc.edu\\cdesa@cs.cornell.edu, chrismre@cs.stanford.edu}}
\newcommand{\eat}[1]{}
\newcommand{\xs}{X_S}
\newcommand{\xsbar}{X_{\bar{S}}}
\begin{document}
\maketitle
	\begin{abstract}
		A challenge in training discriminative models like neural networks is obtaining enough labeled training data.
Recent approaches use generative models to combine weak supervision sources, like user-defined heuristics or knowledge bases, to label training data.
Prior work has explored learning accuracies for these sources even without ground truth labels, but they assume that a single accuracy parameter is sufficient to model the behavior of these sources over the entire training set.
In particular, they fail to model latent subsets in the training data in which the supervision sources perform differently than on average.
We present \emph{Socratic learning}, a paradigm that uses feedback from a corresponding discriminative model to automatically identify these subsets and augments the structure of the generative model accordingly.
Experimentally, we show that without any ground truth labels, the augmented generative model reduces error by up to $56.06\%$ for a relation extraction task compared to a state-of-the-art weak supervision technique that utilizes generative models. 

	\end{abstract}

	\section{Introduction}
	\label{sec:intro}
	Complex discriminative models like deep neural networks require significant amounts of training data.
For many real-world applications, obtaining this magnitude of labeled data is expensive and time-consuming. Recently, generative models have been used to combine noisy labels from various weak supervision sources, such as knowledge bases and user-defined heuristics \cite{xiao2015learning,ratner2016data,takamatsu2012reducing,roth2013combining,alfonseca2012pattern}.  They treat the true class label as a latent variable and learn a distribution over labels associated with each source using unlabeled data.
%They treat the true class label as a latent variable and learn the parameters associated with the sources using unlabeled data.
Probabilistic training labels, which can be used to train any discriminative model, are assigned to unlabeled data by sampling from this distribution. 

Even without ground truth data, generative models have been successful in learning accuracies for \cite{ratner2016data} and dependencies among \cite{bach2017learning} the different weak supervision sources. However, each of these methods relies on the crucial assumption that weak supervision sources have uniform accuracy over the \emph{entire} dataset. This assumption rarely holds for real-world tasks; for example, a knowledge base is usually relevant for part of the data being labeled and users develop their heuristics based on their understanding of the data, which is likely to be incomplete, and thus implicitly model only a subset of the data. As we show in Section~\ref{sec:exp}, failure to model these \emph{latent subsets} in training data can affect performance  by up to $3.33$ F1 points, measured in terms of the performance of the discriminative model trained on labels from the generative model.

%Intro Example
\begin{figure}[ht]
\centering
\centerline{\includegraphics[width=0.98\linewidth]{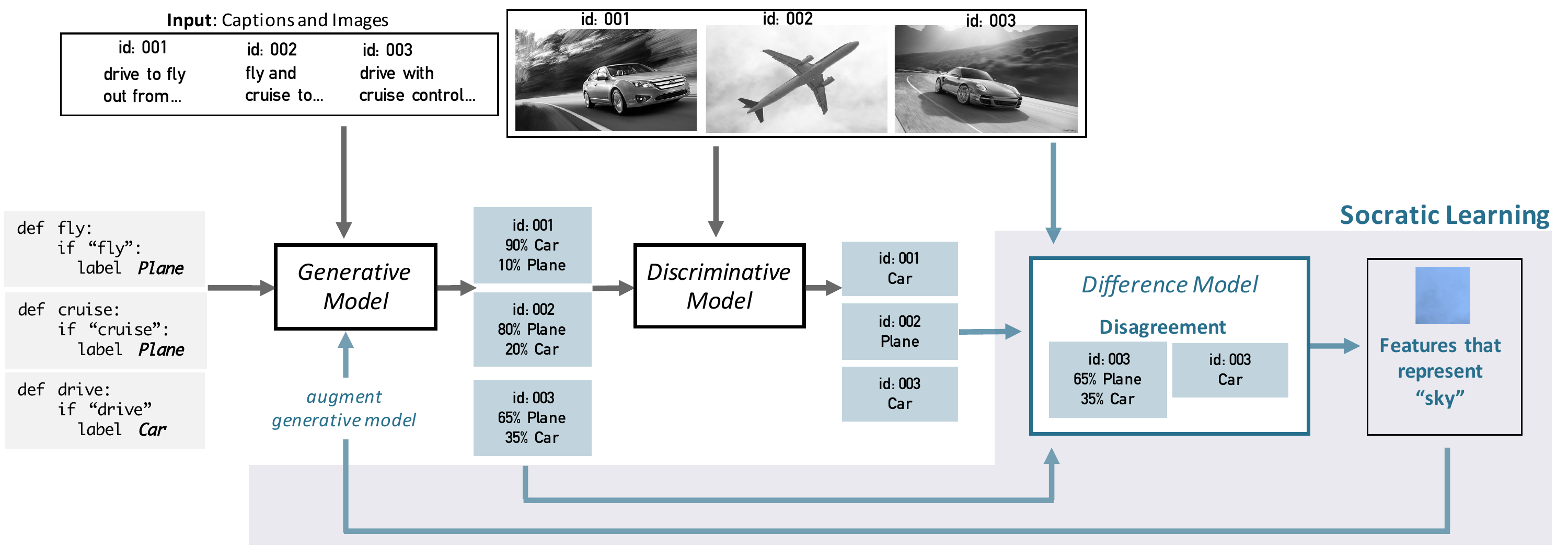}}
\caption{Toy example: The generative model receives the heuristics and captions as input. It creates a probabilistic training label for each object to pass to the discriminative model, which learns a representation over the images. The training labels and predictions are passed to the difference model, which selects features related to ``sky'' and passes it to the generative model.}
\label{fig:intro_eg}
\end{figure}

Accounting for latent subsets in training data when the true label is unknown is a challenging problem for several reasons. First, the lack of ground truth data makes it difficult for users to manually recognize subsets that their heuristics are more accurate for. Second, the generative model only has access to the observed variables, noisy labels from the weak supervision sources, which do not contain enough information to learn appropriate parameters to model the latent subsets. Finally, without knowledge of specific properties of the underlying data, it is difficult to identify the latent subsets that need to be modeled in order to accurately capture the underlying distribution of the training labels.
 
We present Socratic learning, a paradigm that uses information from the discriminative model to represent latent subsets in the training data and augments the generative model accordingly. To address the first challenge above, Socratic learning is designed to be an automated pipeline that requires \emph{no user input}. Next, to provide the generative model with knowledge about latent subsets in the training data, Socratic learning packages the necessary information using features from the corresponding discriminative model, which can be any model appropriate for the given problem. Using features of the data to represent the latent subsets ensures that the subsets are related to specific characteristics of the data and therefore likely to systematically affect accuracies of weak supervision sources. Finally, Socratic learning augments the generative model by adding extra parameters that capture different accuracies of the weak supervision sources conditional on the value of the representative features (Section~\ref{subsec: incorporating}).

While this process addresses the challenge of representing and incorporating latent subsets in the generative model, the challenge of identifying the features associated with the subsets without ground truth labels remains. Since, the corresponding discriminative model in our setting is more powerful than the generative model, it learns a better representation of the underlying data than the generative model. We therefore utilize this \emph{difference} in the generative and discriminative models of the data to identify which features are representative of the latent subsets. As shown in Figure~\ref{fig:intro_eg}, the difference model operates over the disagreement between generative and discriminative labels and identifies features that are most correlated with this disagreement. We prove that under a standard set of assumptions, the number of unlabeled data points needed to select the correct features with high probability scales logarithmically with the total number of features (Section~\ref{sec: difference}).

To describe Socratic learning concretely, we look at an image classification example shown in Figure~\ref{fig:intro_eg}. The generative model assigns probabilistic labels for the unlabeled captions after learning the accuracies of the heuristics \texttt{fly,cruise,plane}. These probabilistic labels and the associated images are used to train the corresponding discriminative model, which in turn predicts labels for the same data. Socratic learning uses the difference model to identify the disagreement between the two models' labels and selects the features most correlated with it. In this case, the features representative of the disagreement represent the presence of sky in the background. These are then passed to the generative model, which then learns a much higher accuracy for the \texttt{cruise} heuristic when these features are active.
%features representing the presence of sky in the background are passed to the generative model, which then learns a much higher accuracy for the \texttt{cruise} heuristic when these features are active. 

Our motivation for Socratic learning also comes from observing domain experts working within paradigms where the generative model is specified by user-defined heuristics \cite{ratner2016data}.
Heuristics for tasks like scientific text-relation extraction are complex rules that are, for example, pertinent only for specific types of documents can be difficult to debug manually.
Adding Socratic learning to their pipeline incorporates these nuances and automatically corrects the generative model. 
Biomedical experts confirm that the features that Socratic learning identifies are indeed correlated with the accuracy of the heuristics they wrote (Section~\ref{sec:exp}). 
We apply Socratic learning to a range of real-world datasets that utilize discriminative models such as CNNs~\cite{krizhevsky2012imagenet} and LSTMs~\cite{hochreiter1997long}.
Socratic learning improves discriminative model performance by reducing error in terms of achieving the performance of the fully supervised approach by up to $56.06\%$ for a text relation extraction task and $39.75\%$ for an image classification task.

\textbf{Summary of Contributions}
Our main contribution is the \emph{Socratic learning} paradigm, which addresses the issue of latent subsets in training data by augmenting the generative model defined by weak supervision sources.
Socratic learning relies on knowledge transfer between generative and discriminative models in the context of weak supervision to identify these subsets via the difference model. 
In Section~\ref{sec: difference}, we prove under standard assumptions that the difference model successfully selects features representative of the latent subsets. 
We then describe how we modify the single parameter version of the generative model to incorporate these features in Section~\ref{sec:method}. 
Finally, we report the performance of Socratic learning on various real-world datasets in Section~\ref{sec:exp}.

	\section{Background}
	\label{sec:bg}
	As the need for large, labeled training sets grows, recent methods use a generative model to combine various forms of weak supervision and assign noisy labels to available training data.
Socratic learning augments these generative models to model the latent subsets in training data.
To concretely describe the modifications made by Socratic learning on such generative models, we choose a recent approach, data programming \cite{ratner2016data}, which generalizes and incorporates various sources of weak supervision such as crowdsourcing, distant supervision, and hierarchical topic modeling.
Data programming models these sources as labeling functions, or heuristics, and uses the generative model to learn their accuracies and estimate probabilistic training labels, without the use of any ground-truth labels.

Data programming focuses on binary classification problems where each object $o \in O$ has an unknown true label $Y(o) \in \{-1,1\}$; a vector of features $V(o) \in \mathbb{R}^{Q}$; and a set of $M$ heuristics $\Lambda(o) \in \{-1,0,1\}^{M}$, which can vote for either class or abstain.
We have access to the features and heuristics for $N$ objects, but not their true label.
The goal is to learn a distribution for $Y(o)$ over all objects $o$.

\paragraph{Generative Model} 
The generative model $G$ uses a distribution in the following family to describe the relationship between the heuristics $\Lambda \in \mathbb{R}^{M \times N}$ and the latent variable $Y \in \mathbb{R}^{N}$, which represents the true class:
\begin{eqnarray}
	\pi_{\phi}(\Lambda, Y) = \frac{1}{Z_{\phi}} \exp\left(\phi^T \Lambda Y \right)
	\label{eqn:dp_factor_graph} 
\end{eqnarray}
where $Z_{\phi}$ is a partition function to ensure $\pi$ is a normalized distribution. The parameter $\phi \in \mathbb{R}^M$, which encodes the average accuracy of each of the $M$ heuristics, is estimated by maximizing the marginal likelihood of the observed heuristics $\Lambda$.
The generative model assigns probabilistic labels by computing $\pi_{\phi}(Y \mid \Lambda(o))$ for each object $o$ in the training set. 

\paragraph{Discriminative Model} The probabilistic labels for training data and the features $V \in \mathbb{R}^{N \times Q}$ are passed to the discriminative model $D$, which learns a classifier that can generalize over all objects, including those not labeled by the generative model.
Since the discriminative model has access to the probabilistic labels, it minimizes the noise-aware empirical risk of the following form for a logistic regression model:
\begin{eqnarray}
\label{eqn:noise_aware} 
  L_{\phi}(\theta) = \mathbb{E}_{Y \sim \pi_{\phi}(\cdot\mid\Lambda)}\left[\log\left(1+\exp(-\theta^TV^TY)\right)\right].
\end{eqnarray}
This loss takes the expectation over the distribution of the true class learned by the generative model. ,Any discriminative model can be made noise-aware by modifying its loss function similarly. % TODO: cut?
%\new{We will now refer to discriminative models with similarly modified loss functions as noise-aware discriminative models.}

	\section{Socratic Learning}
	\label{sec:method}
	In this section, we describe how Socratic learning initiates a cooperative dialog between the generative and discriminative models to incorporate information about latent subsets in the training data into the generative model.
First, we provide an overview of the Socratic learning algorithm.
We then show how to identify the latent subsets in the training data using the disagreement between the generative and discriminative models.
Finally, we describe how to incorporate the latent subsets into the generative model, resulting in a more expressive model that improves training set accuracy.

\subsection{Algorithm}
\label{subsec:alg}
% why do we need a difference model
Socratic learning first uses the original generative model, such as the one described in Equation~\eqref{eqn:dp_factor_graph}, to compute probabilistic labels $Y_G$ for the training data.
These labels are then used to train a discriminative model, which then computes its own labels $Y_D$ for the training data.
Socratic learning uses the disagreement $\tilde Y = -Y_GY_D$ between the labels assigned by the generative and discriminative models to identify the latent subsets that will help improve the generative model.
We describe how these features are selected in Section \ref{sec: difference}.
The generative model is then updated to model the latent subsets represented by the selected features.
These modifications are described in Section \ref{subsec: incorporating}.
The augmented generative model assigns new training labels that are more accurate than those from the original generative model.
These probabilistic labels are used to re-train the discriminative model.

In practice, the number of latent subsets is unknown.
As a result, the number of features to select ($K$) is treated as a hyperparameter and updated iteratively as described in Algorithm~\ref{alg:SL}.
The re-trained discriminative model is evaluated on a held-out development (dev) set at each step to track its performance.
In case no ground truth labels are available, the agreement between the generative and discriminative labels can also be used to decide when to stop passing features to the generative model, as shown in Figure~\ref{fig:disag}.
We observe that as $K$ increases, the quality of the generative model also increases, as witnessed by the increasing agreement between the generative and discriminative model labels. However, if too many features are passed to the generative model, then it can overfit, thus reducing agreement with the discriminative model.

%Algorithm
\begin{algorithm}[htb]
	\caption{Socratic  Learning}
		\label{alg:SL}
	\begin{algorithmic}
		\STATE {\bfseries Input}: Heuristics $\Lambda \in \{ -1,0,1\}^{M\times N}$, features $V \in \mathbb{R}^{N \times Q}$, features $X \in \{ -1,1\}^{N \times P}$
		\STATE Learn parameter $\phi$ for generative model $\pi_{\phi}(\Lambda, Y)$ (Equation~\eqref{eqn:dp_factor_graph})
		\STATE Assign probabilistic training labels $Y_G = [-1,1]^N$ with generative model
		\STATE Learn parameter $\theta$ that minimizes noise-aware discriminative loss $L_{\phi}(\theta)$ (Equation~\eqref{eqn:noise_aware})
		\STATE Predict labels $Y_D \in \lbrace -1,1 \rbrace ^ N$ with discriminative model
		\STATE Compute disagreement $\tilde{Y} = -Y_GY_D$
		\STATE $K=0$
		
		\REPEAT
		\STATE $K=K+1$
		\STATE Use difference model (with appropriate $\lambda$) to select features $X_{S_1\ldots S_K}$ indicative of $\tilde{Y}$ (Equation~\eqref{eq:lasso})
                \STATE Learn parameter $\phi$, $W$ for generative model $\pi_{\phi, W}(\Lambda, Y, X_{S_1\ldots S_K})$ and assign $Y_G$ (Section~\ref{subsec: incorporating}) 
                \STATE Learn parameter $\theta$ that minimizes noise-aware discriminative loss $L_{\phi}(\theta)$ and predict $Y_D$
		
		\UNTIL performance stops improving % (Section~\ref{subsec:alg})
	\end{algorithmic}
\end{algorithm}

%Disag Example Figure 
\begin{figure}[h]
  \centering
\centerline{\includegraphics[width=\linewidth]{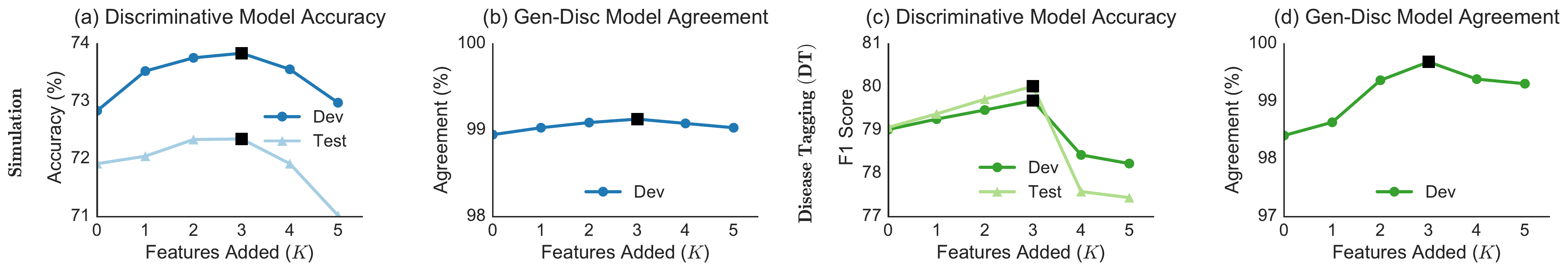}}
  \caption{(a),(c) Discriminative model improvement and (b),(d) agreement between generative and discriminative model labels vs. number of features added to the generative model ($K$) for simulation and DT experimental task.}
\label{fig:disag}
\end{figure} 

\subsection{Difference Model: Identifying Latent Classes}
\label{sec: difference}
We now describe the difference model used by Socratic learning to select the features, which are representative of the latent subsets in training data, to pass to the generative model.
The difference model takes in the probabilistic generative labels $Y_G\in [-1,1]^N$, the discriminative labels $Y_D\in\{-1,1\}^N$, and a set of binary features $X\in\{-1,1\}^{N \times P}$.
Note that these features may be different than the features $V\in\mathbb{R}^{N \times Q}$ used by the discriminative model, and we explore both scenarios in our experiments. % In our experiments, we explore both scenarios of $X$ being the same as and different than $V$.

Socratic learning recovers the features that are most useful for predicting the disagreement between the labels $\tilde Y = -Y_DY_G$ (element-wise product).
We denote these relevant features as the set $X_S$.
Since the set of relevant features is typically sparse, 
the difference model uses $\ell_1$-regularized linear regression (LASSO) and solves the following problem:
\begin{align}
  \label{eq:lasso}
  \hat{\theta} = \arg\min_{\theta} \|X\theta - \tilde Y\|_2^2 + \lambda \|\theta\|_1.
\end{align}
%where $\lambda$ is a regularization parameter used to control the number of features $K$ that are selected.
%The difference model then selects features corresponding to non-zero parameters that maximally correlate with the disagreement. These relevant features correspond to the set $X_S$ from Section 3.1.

Our setting differs from conventional LASSO problems in several ways.
First, in conventional LASSO problems, the data is assumed to be drawn from a noisy linear regression model.
In contrast, we only know that some features and the disagreement are correlated, but do not have any information about the parameters associated with these features. 
Second, in the conventional setting, all non-zero parameters should be recovered.
In the Socratic learning setting, we are not necessarily interested in recovering \emph{all} features correlated with the disagreement, only ones that will improve the generative model.
For example, we may not need a particular feature for predicting the disagreement if there is another feature very similar to it, since they would both relate to the same latent subset in the data.
As a result, our setting requires us to determine which features are \emph{necessary} to recover.

\paragraph{Theoretical Guarantees}
In the rest of this section, we provide sufficient conditions under which the correct set of features is recovered with high probability.
To specify our conditions, we will use the shorthand notations
\begin{align*}
\Sigma_{SS} = \frac{1}{N}\xs^T\xs \\
\Sigma_{S\bar S} = \frac{1}{N}\xs^T\xsbar
\end{align*}
where $\xsbar$ contains the features in $X$ but not in $\xs$.

%\todo{define alpha as a constant}
We now state two assumptions related to standard conditions used in the conventional setting.
First, we assume incoherence \cite{ravikumar2010high,wainwright2009sharp} %,fuchs2004recovery,tropp2006just
\begin{align}
\label{eq: incoherence}
\|\Sigma_{S\bar S}^T\Sigma_{SS}^{-1}\|_\infty \leq 1 - \alpha.
\end{align}
%which requires that the irrelevant features cannot exert an excessive effect on the relevant features.
Our next assumption resembles the dependence condition \cite{ravikumar2010high} and requires that the relevant features are not overly dependent on each other.
\begin{align}
\label{eq: dependence}
\|\Sigma_{SS}^{-1}\|_\infty \leq \beta.
\end{align}

We also describe properties of $\xs$ and $\xsbar$ that are necessary due to the fact that we only observe correlations, not the true parameters.
First, notice that $|\Sigma_{SS}^{-1}\mathbb{E}[\xs\tilde Y]|$ represents the predictive value of each element in $\xs$.
The relevant features must satisfy the following element-wise:
\begin{align}
  \label{prop: rel}
  |\Sigma_{SS}^{-1}\mathbb{E}[\xs\tilde Y]| \geq \gamma > 0,
\end{align}
which means that all relevant features have some predictive power for estimating the disagreement.

Next, for any irrelevant feature in $\xsbar$, we must have that
\begin{align}
  \label{prop: irrel}
  \left\|\mathbb{E}\left[\xsbar^T\left(\Pi_S - I\right)\tilde Y\right]\right\|_\infty \leq \frac{\alpha\gamma}{\beta} - c
\end{align}
for some $c>0$,
where $$\Pi_S = \xs\left(\xs^T\xs\right)^{-1} \xs^T$$ denotes the projection onto the space that can be predicted by the features $\xs$.
Notice that $(\Pi_S - I)\tilde Y$ then represents the residual of the disagreement that cannot be represented by the relevant features.
This property implies that none of the irrelevant features are strongly correlated with the residual after the relevant features are used. 

%Theorem
Under the stated conditions, we provide a guarantee on the probability of the difference model identifying the correct set of relevant features $\xs$ in Theorem~\ref{thmdiffmodel}. 

\begin{restatable}{theorem}{thmdiffmodel}
  \label{thmdiffmodel}
  Suppose we satisfy conditions \eqref{eq: incoherence} and \eqref{eq: dependence} and have relevant features $\xs$ satisfying property \eqref{prop: rel} and irrelevant features $\xsbar$ satisfying property \eqref{prop: irrel}.
  Then, for any tolerated failure probability $\delta > 0$,
  solving the LASSO problem \eqref{eq:lasso} with $\lambda=\frac{\gamma}{\beta}-\frac{c}{\alpha}$ and
  an unlabeled input dataset of size
\begin{align*}
    N
    \geq
    \frac{
    \max\left(
      8\beta^4 / \gamma^2,
    32
    \right)
}{c^2}
    \log\left(\frac{4P}{\delta}\right)
\end{align*}
    will succeed in recovering $\xs$ and $\xsbar$ with probability at least $1 - \delta$.
\end{restatable}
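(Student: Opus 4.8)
The plan is to follow the primal-dual witness (PDW) construction \cite{wainwright2009sharp}, adapted to the fact that we do not posit an underlying linear model but only the population correlation conditions \eqref{prop: rel} and \eqref{prop: irrel}. Working with the $\tfrac{1}{2N}$-normalized least-squares term (so that the empirical Gram matrices are exactly the $\Sigma_{SS}$, $\Sigma_{S\bar S}$ of the statement), the witness is built in two stages. First I would solve \eqref{eq:lasso} restricted to the relevant coordinates: set $\hat\theta_{\bar S}=0$ and let $\hat\theta_S$ minimize over $\theta_S$ alone, whose stationarity condition yields $\hat\theta_S = \Sigma_{SS}^{-1}\left(\tfrac1N\xs^T\ytilde\right) - \lambda\,\Sigma_{SS}^{-1}\hat z_S$ with $\hat z_S\in\partial\|\hat\theta_S\|_1$. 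Second, I would verify that this pair extends to a genuine optimum of the full problem by checking strict dual feasibility on the irrelevant coordinates, $\big\|\tfrac1N\xsbar^T(\xs\hat\theta_S-\ytilde)\big\|_\infty<\lambda$. A standard lemma then forces every optimum to have support inside $S$, and combined with a sign-correctness check on $\hat\theta_S$ this pins the recovered support down to exactly $S$.

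The analysis reduces to two deterministic inequalities that must hold with high probability. For dual feasibility, substitute the residual $\xs\hat\theta_S - \ytilde = (\Pi_S - I)\ytilde - \lambda\,\xs\Sigma_{SS}^{-1}\hat z_S$ into the feasibility quantity, which splits into a ``signal'' term $\tfrac1N\xsbar^T(\Pi_S-I)\ytilde$, whose population value is bounded by $\tfrac{\alpha\gamma}{\beta}-c$ by property \eqref{prop: irrel}, and an incoherence term $\lambda\,\Sigma_{S\bar S}^T\Sigma_{SS}^{-1}\hat z_S$, bounded by $\lambda(1-\alpha)$ using \eqref{eq: incoherence} together with $\|\hat z_S\|_\infty\le1$. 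The choice $\lambda=\tfrac{\gamma}{\beta}-\tfrac{c}{\alpha}$ is precisely the value at which the sum of these population bounds equals $\lambda$, so a slack of order $c$ remains to absorb the sampling error.

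For sign correctness I would show no relevant coordinate is driven to zero: the signal part $\Sigma_{SS}^{-1}\E[\xs\ytilde]$ of $\hat\theta_S$ has magnitude at least $\gamma$ by \eqref{prop: rel}, while the shrinkage part $\lambda\,\Sigma_{SS}^{-1}\hat z_S$ has magnitude at most $\lambda\beta<\gamma$ by \eqref{eq: dependence}, so the sign of each $(\hat\theta_S)_j$ is fixed. The remaining content is concentration of the empirical objects around their population counterparts. Since every entry of $X$ lies in $\{-1,1\}$ and $\ytilde\in[-1,1]$, each coordinate of $\tfrac1N\xsbar^T(\Pi_S-I)\ytilde$ and of $\tfrac1N\xs^T\ytilde$ is an average of bounded terms, so Hoeffding's inequality controls its deviation and a union bound over all $P$ features makes the control uniform --- the source of the $\log(4P/\delta)$ factor. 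Forcing the irrelevant-feature deviations below the slack $c$ gives the requirement $N\gtrsim c^{-2}\log(4P/\delta)$, whereas preserving the signal $\gamma$ on the relevant coordinates --- including controlling the fluctuation of $\Sigma_{SS}^{-1}$, governed by $\beta$ through \eqref{eq: dependence} --- produces the competing term $8\beta^4/\gamma^2$; taking the larger of the two yields $\max(8\beta^4/\gamma^2,32)$.

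The main obstacle I anticipate is the coupling introduced by the projection $\Pi_S=\xs(\xs^T\xs)^{-1}\xs^T$: unlike the usual fixed-design LASSO analysis, $\Pi_S$ is itself a random matrix built from the same data, so $\tfrac1N\xsbar^T(\Pi_S-I)\ytilde$ is not a clean average of independent terms and its expectation does not factor directly into the population object of \eqref{prop: irrel}. The delicate step is to show this empirical projection-residual correlation still concentrates around that population quantity; I expect to handle it by first establishing that $\Sigma_{SS}$ concentrates around its population value so that $(\xs^T\xs)^{-1}$ is well-controlled via \eqref{eq: dependence}, and then separating the randomness of $\Pi_S$ from that of $\xsbar$ and $\ytilde$ --- which is exactly where the $\beta^4$ dependence and the baseline constant $32$ in $\max(8\beta^4/\gamma^2,32)$ enter.
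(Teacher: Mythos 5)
Your proposal matches the paper's proof in essentially every respect: the paper's Lemma~\ref{lemmalasso} is precisely your primal-dual witness step (zero-subgradient construction with $\theta_{\bar S}=0$, strict dual feasibility on $\bar S$, and the non-vanishing check on $\theta_S$ via \eqref{eq: dependence} and \eqref{prop: rel}), and the probabilistic part is the same Hoeffding-plus-union-bound argument with the identical slack accounting that produces $\lambda=\frac{\gamma}{\beta}-\frac{c}{\alpha}$, the $\log(4P/\delta)$ factor, and the two competing terms in $\max\left(8\beta^4/\gamma^2,\,32\right)/c^2$. The only divergence is that you explicitly flag the sample-dependence of $\Pi_S$ as an obstacle requiring extra decoupling, whereas the paper silently absorbs it by applying the Azuma--Hoeffding inequality directly to the terms of $\frac{1}{N}\xsbar^T(\Pi_S - I)\tilde Y$, treating \eqref{prop: irrel} as the mean of that empirical quantity.
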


We now provide a proof sketch and include the full proof in the appendix.
First, consider the case in which there are enough samples to accurately estimate the correlations---that is, 
\begin{align*}
  &\frac{1}{N}\xs^T\tilde Y\approx\mathbb{E}[\xs\tilde Y] \\
  &\frac{1}{N}\xsbar^T\tilde Y\approx\mathbb{E}[\xsbar\tilde Y].
\end{align*}
In this case, we show that by solving problem \eqref{eq:lasso}, the correct features will always be selected.
We then consider the general case in which we only receive samples and have to estimate the correlations from the samples.
We use concentration inequalities to show that with high probability, we will fall in the regime of the previous case, implying that the correct features will be selected.

\begin{figure}[htbp]
  \centering
  \includegraphics[width=0.5\columnwidth]{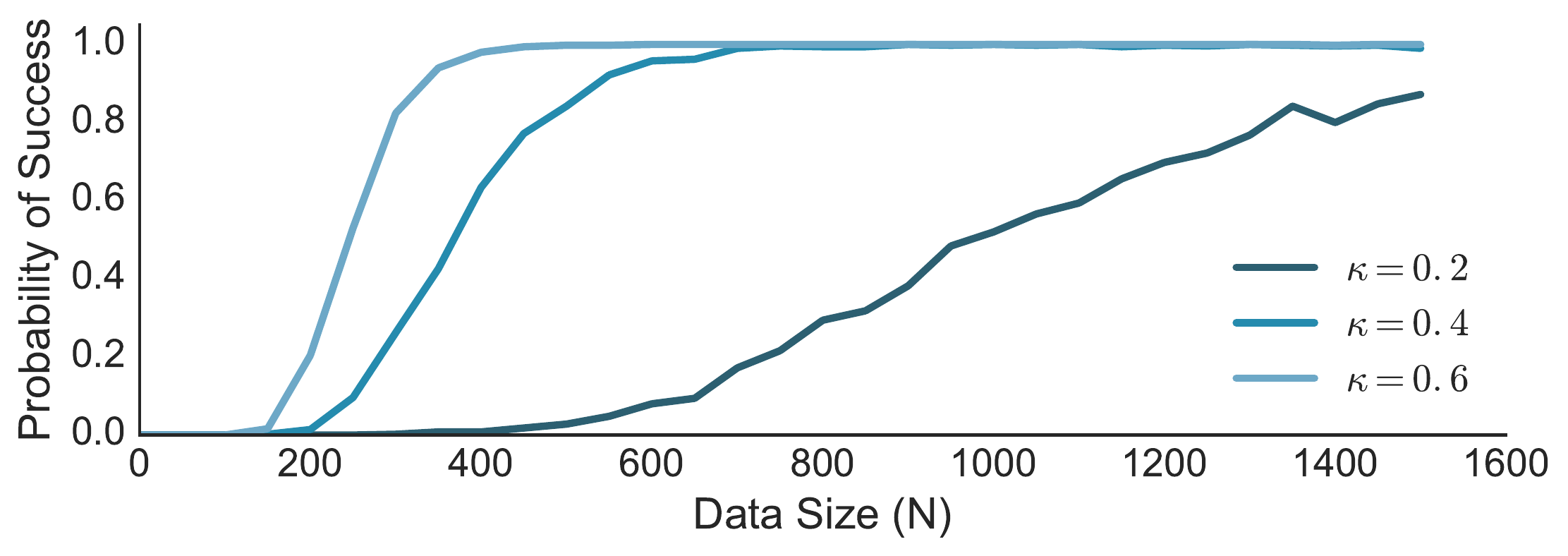}
  \caption{Probability of recovering relevant features $X_S$ with difference model with varying $\kappa = \mathbb{E}[X_S \tilde Y]$.}
  \label{fig:scaling}
\end{figure}

We run a simulation to verify that the correct features are recovered with high probability, results of which are shown in Figure \ref{fig:scaling}.
In this simulation, we have 100 features, of which 3 are relevant.
The expected correlation of the relevant features is selected from $0.2$, $0.4$, and $0.6$ in three experiments.
In each experiment, the probability of successfully recovering the 3 relevant features rapidly grows to 1 as the number of unlabeled data points is increased.
%In the next section, we apply the difference model to real-world settings, where we observe that the difference model continues to select features that improve the generative model.

\subsection{Incorporating Latent Classes into the Model}
\label{subsec: incorporating}
%NIPS New
Once the set of $K \ll P$ features representative of the latent subsets are selected, Socratic learning augments the generative model to account for them.
With access to the selected features, the generative model learns a different set of accuracies for each subset.
In the crowdsourcing setting, this could mean that for a subset of the data, we trust the workers more than on average since those data points are easier to label.
For knowledge bases, this could mean that a database is better suited for a portion of the data than the rest.

For concreteness, we describe this augmented generative model in Equation \eqref{eqn:dp_factor_graph}.
%Other generative models can be augmented in a similar manner.
%Returning to the generative model from Section~\ref{sec:bg}, we show how we modify Equation~\eqref{eqn:dp_factor_graph} to account for these latent classes in the data.
Socratic learning jointly models the relationship between the heuristics, the latent true class, and the features $X_S$ via the following family: 
\begin{align}
  \begin{split}
  &\pi_{\phi, W}(\Lambda, Y, X_S)=\\
  &\frac{1}{Z_{\phi, W}}
  %\Bigg[\exp \Big(\underbrace{\phi^T \Lambda Y\vphantom{\sum}}_{\mathclap{\textrm{Data Programming}}} +  \sum_{i=1}^K W_i^T \Lambda \mathrm{diag}(X_{S_i})Y \Big) \Bigg]
  \exp \Big(\phi^T \Lambda Y\vphantom{\sum} +  \sum_{i=1}^K W_i^T \Lambda \mathrm{diag}(X_{S_i})Y \Big)
  \end{split}
	\label{eqn:sl_factor_graph}
\end{align} 
where $Z_{\phi, W}\in\mathbb{R}$ is a partition function to normalize the distribution, and $W_i \in \mathbb{R}^M$ are weights associated with the features $X_{S_i} \in  \mathbb{R}^N$, $i=1,\ldots,K$.
The first term in the exponential models the heuristics as having a uniform accuracy over the whole data set.
The second term adjusts the accuracies by encoding the latent class in feature $X_{S_i}$. 
%This model has a larger number of parameters and thus captures a wider set of underlying distributions for the data than the model from Equation~\eqref{eqn:dp_factor_graph}. 
This model captures a wider set of underlying distributions for the data than the model from Equation~\eqref{eqn:dp_factor_graph}, while only requiring additional \emph{unlabeled} data to accurately learn the additional parameters, $W_i$. 
These parameters determine how the accuracy of the weak supervision sources are different for the latent subsets specified by selected features $\xs$.
This concept of modeling latent subsets in training data by identifying features that represent them, then learning a different set of parameters for them via the generative model can be extended to other approaches that use generative models to combine sources of weak supervision as well.

%Toy Example Figure
%\begin{figure}[ht]
%%\vskip -0.1in
%\begin{center}
%\centerline{\includegraphics[width=0.4\columnwidth]{images/misspecification_new-crop}}
%\caption{Example of a labeling function that has an accuracy of $75\%$ overall and different accuracies over two subsets in the data, correlated with feature $X_{S_1}$.}
%\label{fig:segment}
%\end{center}
%\vskip -0.2in
%\end{figure} 

	\section{Experimental Results}
	\label{sec:exp}
	% exp
%Stats Table
\begin{table}[!htbp]
\small
\centering
\caption{Weak Supervision (WS) Sources (KB: Knowledge Base, UDH: User-Defined Heuristic) and Statistics for Experimental Datasets (N: size of unlabeled training data, P: number of features passed to difference model, K: number of features passed to augmented generative model)}
\label{table:exp-stats}
\begin{tabular}{@{}cccccccccc@{}}
\toprule
\multirow{2}{*}{\textbf{Application}} & \multirow{2}{*}{\textbf{Domain}} & \multicolumn{3}{c}{\textbf{Source}}                  & \multicolumn{5}{c}{\textbf{Statistics}}                                                         \\ \cmidrule(l){3-5}  \cmidrule(l){6-10} 
                                      &                                  & \textbf{KBs} & \textbf{UDHs} & \textbf{Crowd Worker} & \textbf{\#WS} & \textbf{N} & \textbf{Labeled(\%)} & \textbf{P} & \multicolumn{1}{l}{\textbf{K}} \\ \midrule
CDR                                   & Text                             & \checkmark   & \checkmark    &                       & 33            & 8268       & 74.09                & 122,846    & 3                              \\
DT                                    & Text                             & \checkmark   & \checkmark    &                       & 16            & 11740      & 43.54                & 124,721    & 3                              \\
Twitter                               & Text                             &              &               & \checkmark            & 503           & 14551      & 89.41                & 118,379    & 2                              \\
MS-COCO                               & Text+Image                       &              & \checkmark    &                       & 3             & 8924       & 85.41                & 4096       & 2                              \\ \bottomrule
\end{tabular}
\end{table}

In this section, we explore classification tasks that range from image and sentiment classification to text relation extraction, with weak supervision sources that include knowledge bases, heuristics, and crowd workers. 
We compare the performance of discriminative models trained using labels generated by Socratic learning (SL), the single-parameter-per-source (SP) generative model, majority vote (MV), and the fully supervised (FS) case if ground truth is available.
We show that the performance of the \emph{end discriminative model} trained using labels from our generative model is better than that of the same model trained on labels from the SP model. 
We also discuss how we validated the features that were identified by the difference model by working with domain experts to interpret higher-level topics associated with them.

\subsection{Experimental Datasets}
We applied Socratic learning to two text-based relation extraction tasks, Disease Tagging (DT) and Chemical Disease Relation Extraction (CDR) \cite{cdr}, which look for mentions of diseases and valid disease-chemical pairs in PubMed abstracts. The CDR task relied on user-defined heuristics and distant supervision from the Comparative Toxicogenomics Databse (CTD) \cite{ctd} knowledge base. The DT task relied on user-defined heuristics and knowledge bases that were part of the Unified Medical Language System (UMLS) \cite{umls} collection.
For both tasks, domain experts fine-tuned their weak supervision sources over a period of several weeks; yet, SL is able to reduce error by up to $56.06\%$  compared to the SP generative model in terms of achieving the fully supervised score (Table~\ref{table:disc-stats}).

%Disc Model Stats Table
\begin{table}[!htbp]
\centering
\caption{Discriminative model performance in terms of improvement (Imp.) and error reduction using SL. SL consistently outperforms both MV and SP generative models, closing the gap between weakly and fully supervised models.}
\label{table:disc-stats}
\begin{tabular}{@{}cccccc@{}}
\toprule
\multirow{2}{*}{\textbf{Application}} & \multirow{2}{*}{\textbf{Model}} & \multicolumn{2}{c}{\textbf{Imp. Over}} & \multicolumn{2}{c}{\textbf{Error Reduction (\%)}} \\ \cmidrule(l){3-4} \cmidrule(l){5-6} 
                                      &                                 & \textbf{MV}           & \textbf{SP}           & \textbf{MV}             & \textbf{SP}             \\ \midrule
\multirow{2}{*}{CDR}                  & LR                              & 4.45                  & 3.33                  & 63.03                   & 56.06                   \\
                                      & LSTM                            & 2.70                  & 0.90                  & 42.19                   & 19.57                   \\
\multirow{2}{*}{DT}                   & LR                              & 1.74                  & 1.09                  & 20.49                   & 13.90                   \\
                                      & LSTM                            & 11.21                 & 0.99                  & 53.61                   & 9.26                    \\ \midrule
Twitter                               & LR                              & 4.07                  & 2.44                  & -                       & -                       \\
MS-COCO                               & AlexNet                         & 3.75                  & 2.23                  & 52.59                   & 39.75                   \\ \bottomrule
\end{tabular}
\end{table}

%\footnote{\url{https://www.crowdflower.com/data/airline-twitter-sentiment}}. \todo{cannot have footnotes, change to citation}
We also used crowd workers as sources of weak supervision for a Twitter sentiment analysis task \cite{twitter}, which meant the number of sources was significantly higher for this task than others.
Moving from the text to the image domain, we defined a task based on the MS-COCO dataset \cite{mscoco} where we wrote heuristic rules that used the content of captions to determine whether there was a human in the image or not. In each of these cases where the weak supervision sources were not tuned at all, SL is able to provide improvements of up to $2.44$ accuracy points, reducing error by up to  $ 39.75\%$. This demonstrates that even for relatively simple tasks or tasks with several sources of labels, SL can still correctly identify and augment the generative model and improve end discriminative model performance significantly.

The number of features passed to the generative model before the performance degrades is $3$ or less for each dataset. Although this is a seemingly small number of features, this leads to an exponential number of subsets in training data. Moreover, these subsets have to affect the training labels from the augmented generative model enough to lead to an improvement in the \emph{end discriminative model}. Finally, these features represent subsets that users did not account for while developing their weak supervision sources, despite domain experts spending weeks on the task in some cases.

\begin{figure}[!htbp]
  \centering
  \includegraphics[width=\linewidth]{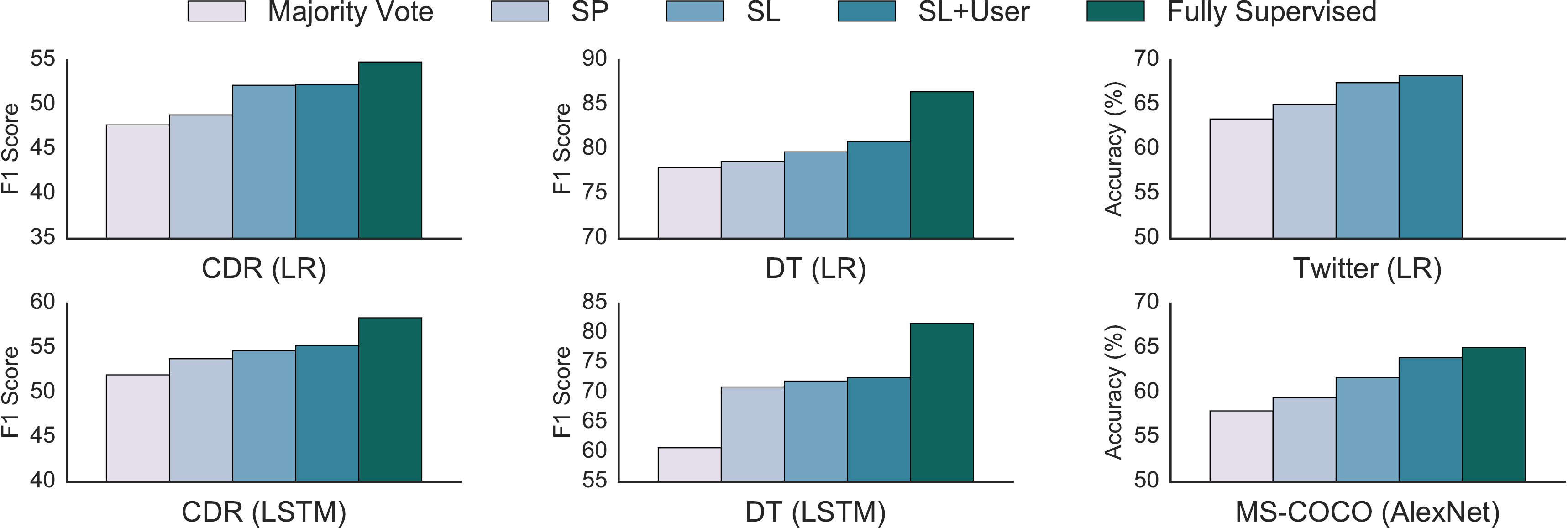}
  \caption{Discriminative model performance for different datasets and discriminative model choices comparing Socratic learning (SL) generative model, user-driven model augmentation in addition to Socratic learning (SL+U), single-parameter-per-source (SP) model, and majority vote. SL consistently outperforms SP and majority vote, while SL+U performs the best in each case. Absolute numbers provided in the Appendix.}
  \label{fig:results}
\end{figure}

\subsection{Performance of Discriminative Model} 
For the text relation extraction tasks, we used two discriminative models, logistic regression (LR) with hand-tuned features and LSTMs that take the raw text as input. The features for the discriminative and difference model are the same in the LR case and different for the LSTM case, in which the difference model features are the hand-tuned features and the discriminative model operates over the raw text.

We report the results in terms of F1 score, the harmonic mean of recall and precision. As shown in Table~\ref{table:disc-stats}, Socratic learning improves over the single parameter model consistently. With LR as the discriminative model, it improves by $3.33$ F1 points, and with the LSTM model, by $0.99$ F1 points, despite the weak supervision sources being extensively tuned by domain experts. As discussed in the next section, the latent subsets Socratic learning identifies are relevant to these highly technical tasks as well. 

SL performs even better on the Twitter and MS-COCO datasets, where the weak supervision sources are not finely tuned.  Even though the training labels were acquired from $503$ crowd workers for Twitter, SL provides an improvement of $4.07$ over majority vote and $2.44$ over SP (Table~\ref{table:disc-stats}), using a bag-of-words feature representation. For MS-COCO, we apply transfer learning by using AlexNet pre-trained on ImageNet, using the last layer as a feature extractor \cite{transferlearning}. Although the fairly simple task was to differentiate images that contained a person or not, SL is able to identify latent subsets in the training data that the heuristics did not model properly, reducing error by $39.75\%$ (Table~\ref{table:disc-stats}). This demonstrates that even seemingly simplistic datasets can have latent subsets that are difficult to recognize manually. 

\begin{table}[!htbp]
\centering
\caption{Discriminative model performance with user-driven model augmentation in addition to SL in terms of absolute improvement (Imp.) and error reduction. Users edit their weak supervision sources according to the their interpretation of features the difference model selects.}
\label{table:user-stats}
\begin{tabular}{@{}cccccc@{}}
\toprule
\multirow{2}{*}{\textbf{Application}} & \multirow{2}{*}{\textbf{Model}} & \multicolumn{2}{c}{\textbf{Imp. Over}} & \multicolumn{2}{c}{\textbf{Error Reduction (\%)}} \\ \cmidrule(l){3-4} \cmidrule(l){5-6} 
                                      &                                 & \textbf{SP}           & \textbf{SL}           & \textbf{SP}             & \textbf{SL}             \\ \midrule
\multirow{2}{*}{CDR}                  & LR                              & 3.43                  & 0.10                  & 57.74                   & 3.83                    \\
                                      & LSTM                            & 1.50                  & 0.60                  & 32.61                   & 16.22                   \\
\multirow{2}{*}{DT}                   & LR                              & 2.24                  & 1.15                  & 28.57                   & 17.04                   \\
                                      & LSTM                            & 1.59                  & 0.60                  & 14.87                   & 6.19                    \\ \midrule
Twitter                               & LR                              & 3.25                  & 0.81                  & -                       & -                       \\
MS-COCO                               & AlexNet                         & 4.47                  & 2.24                  & 79.68                   & 66.27                   \\ \bottomrule
\end{tabular}
\end{table}

\subsection{Features Selected by Difference Model}
For each of our experimental tasks, we validate the features that the difference model selects by showing users higher-level topics associated with them.
Note that while we describe how users interpret the features the difference model selects and manually improve their sources accordingly, this is beyond the automated improvement that Socratic learning provides without any user interaction. The core Socratic learning algorithm relies on automated generative model augmentation; the interpretability of the features is a side-effect that can lead to additional improvements, as described in this section.

For the DT task, one of the selected features represents ``induction of'', a common phrase that occurs with the word ``anesthesia'' when describing surgical procedures. 
One of the user-defined heuristics for this task marked candidate disease words as being false when the word ``anesthesia'' appeared nearby. 
Socratic learning correctly identified that this heuristic had a lower accuracy when the phrase ``induction of'' was also present, which biomedical experts confirmed is the case since surgical reports often contain names of potential diseases. 
For the CDR task, one of the features selected represented the word ``toxicity'' and affected the accuracy of heuristics that labeled a chemical-disease relation as true by looking for words like ``induce'' and ``associated''.
The presence of the word ``toxicity'' increased the accuracy of these sources since it suggested that the chemical was toxic, therefore increasing the likelihood that it induces a disease. 
While there were additional features that were detected for both the CDR and DT tasks, we omit them since they require significantly more exposition about the task and structure of PubMed abstracts. 
When biomedical experts edited their heuristics and the way they used knowledge bases given this information, they improved upon the the Socratic learning performance by up to $1.15$ F1 points  points (Table~\ref{table:user-stats}), demonstrating that the interpretability of features can help users manually improve the generative model as well.  
We provide details about the identified features for the remaining tasks in the Appendix.

	\section{Related Work}
	\label{sec:relate}
	% related work
Using weak supervision to efficiently label training data is a popular approach, relying on methods like distant supervision \cite{craven1999constructing,mintz2009distant}, multi-instance learning \cite{riedel2010modeling,hoffmann2011knowledge}, heuristic patterns \cite{hearst1992automatic,bunescu2007learning}, user input \cite{shin2015incremental,stewart2017label,ratner2016data}, and crowdsourcing \cite{dawid1979maximum}. 
Estimating the accuracies of these sources without access to ground truth labels is a classic problem \cite{dawid1979maximum}, especially in the field of crowdsourcing \cite{gao2011harnessing,dalvi2013aggregating,joglekar2015comprehensive,zhang2016spectral}. 
\citet{ruvolo2013exploiting} introduce the idea that these experts have specializations making their labels more accurate on a subset of the data. 
Socratic learning also finds latent subsets in training data, but does so without any ground truth data and for any type of weak supervision source, not only crowd workers. 

Recently, generative models have been used to combine various sources of weak supervision \cite{alfonseca2012pattern,takamatsu2012reducing,roth2013combining}.
These approaches can operate in the semi-supervised setting by modeling the ground-truth data as a very accurate source, but do not require any ground truth data.
\citet{xiao2015learning} proposes a generative approach to train CNNs with a few clean labels and numerous noisy labels by developing a probabilistic graphical model to describe the relationship between images and the labels. \citet{stewart2017label} show how to learn object detectors without any labels by incorporating hand-engineered constraint functions as part of the training objective.
\citet{reed2014training} avoids directly modeling the noise distribution via a bootstrapping approach by using a convex combination of noisy labels and the current model's predictions to generate the training targets. 

The specific example discussed in this paper, data programming \cite{ratner2016data}, proposes using multiple sources of weak supervision in order to describe a generative model and subsequently learn the accuracies of these source.
We chose data programming as a basis to explain Socratic learning since it subsumes various forms of weak supervision, making it a general framework to build upon. 
However, we note that Socratic learning can help identify latent subsets in training data with \emph{any} generative model that assigns training labels, since the difference model that identifies the subsets only needs labels from the generative and discriminative models and a set of features that represent the data. 
It can then augment the generative model for the specific method in a manner similar to the one described for data programming, where the model is forced to learn multiple parameters to represent the latent subsets in the data.

Finally, Socratic learning is also inspired by ideas similar to rule extraction \cite{thrun1995extracting,craven1997using}, where a complex model is simplified in terms of if-then statements. 
We instead simplify the connections between the latent subsets in training data and the weak supervision sources by augmenting the generative model in a manner such that there are implicit if-then statements that model the accuracy of each source conditioned on whether it is operating over a specific latent subset.

	\section{Conclusion}
	\label{sec:conc}
	% conclusion and future work
We introduced Socratic learning, a novel framework that initiates a cooperative dialog between the generative and  discriminative models. 
We demonstrated how the generative model can be improved using feedback from the discriminative model in the form of features. 
We reported how Socratic learning works with text relation extraction, crowdsourcing for sentiment analysis and multi-modal image classification, where it reduces error by up to $56.05\%$ for a relation extraction task and up to $39.75\%$ for an image classification task over a single parameter generative model.

\small
\paragraph*{Acknowledgments}
We thank Henry Ehrenberg and Alex Ratner for help with experiments, and Stephen Bach, Sen Wu, Jason Fries and Theodoros Rekatsinas for their helpful conversations and feedback.
We are grateful to Darvin Yi for his assistance with the DDSM dataset based experiments and associated deep learning models.
We acknowledge the use of the bone tumor dataset annotated by Drs. Christopher Beaulieu and Bao Do and carefully collected over his career by the late Henry H. Jones, M.D. (aka ``Bones Jones'').
This material is based on research sponsored by Defense Advanced Research Projects Agency (DARPA) under agreement number FA8750-17-2-0095.
We gratefully acknowledge the support of the DARPA SIMPLEX program under No. N66001-15-C-4043,
DARPA FA8750-12-2-0335 and FA8750-13-2-0039,
DOE 108845,
%personal
the National Science Foundation (NSF) Graduate Research Fellowship under No. DGE-114747,
Joseph W. and Hon Mai Goodman Stanford Graduate Fellowship,
Annenberg Graduate Fellowship,
%end personal
the Moore Foundation,
National Institute of Health (NIH) U54EB020405,
the Office of Naval Research (ONR) under awards No. N000141210041 and No. N000141310129,
the Moore Foundation,
the Okawa Research Grant,
American Family Insurance,
Accenture,
Toshiba, and Intel.
This research was supported in part by affiliate members and other supporters of the Stanford DAWN project: Intel, Microsoft, Teradata, and VMware.
The U.S. Government is authorized to reproduce and distribute reprints for Governmental purposes notwithstanding any copyright notation thereon.
The views and conclusions contained herein are those of the authors and should not be interpreted as necessarily representing the official policies or endorsements, either expressed or implied, of DARPA or the U.S. Government.
Any opinions, findings, and conclusions or recommendations expressed in this material are those of the authors and do not necessarily reflect the views of DARPA, AFRL, NSF, NIH, ONR, or the U.S. government.

\newpage
{
\bibliography{socrates_bib}{}
\bibliographystyle{abbrvnat}
}

\newpage
\appendix
\onecolumn
\section{Proof of Main Theorem}

%Lemma 3
\begin{restatable}{lemma}{lemmalasso}
  \label{lemmalasso}
  Suppose that we have
  \begin{gather}
    \|(\xsbar^T\xs)(\xs^T\xs)^{-1}\|_{\infty} \leq 1 - \alpha', \label{lemcond1} \\
    \|(\xs^T\xs)^{-1}\|_{\infty} \leq \beta', \label{lemcond2} \\
    |(\xs^T\xs)^{-1}\xs^T Y| \geq \gamma'\textrm{ element-wise}, \label{lemcond3} \\
    \|(\xsbar^T\xs)(\xs\xs)^{-1}\xs^T Y - \xsbar^T Y\|_{\infty} \leq \epsilon' \label{lemcond4} .
  \end{gather}
  Then, solving the problem
  \begin{gather*}
    \arg\min_{\theta} \frac{1}{2}\|X^T\theta - Y\|_2^2 + \lambda \|\theta\|_1
  \end{gather*}
  with the regularization parameter $\lambda$ bounded by
  \begin{align*}
    \frac{\epsilon'}{\alpha'} \leq \lambda <  \frac{\gamma'}{\beta'}
  \end{align*}
  results in all elements of $\theta_S$ being non-zero, and all elements of $\theta_{\bar S}$ being zero.
\end{restatable}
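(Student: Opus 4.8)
The plan is to prove this via the \textbf{primal–dual witness (PDW) construction}, the standard route to exact support recovery for the LASSO under mutual incoherence (cf. \cite{wainwright2009sharp}). I would first build a candidate solution by fiat: set $\hat\theta_{\bar S} = 0$ and let $\hat\theta_S$ solve the LASSO restricted to the support $S$. The point $(\hat\theta_S, 0)$ is optimal for the full problem iff there is a subgradient $z\in\partial\|\hat\theta\|_1$ satisfying the stationarity (KKT) equations $\xs^T(\xs\hat\theta_S - Y) + \lambda z_S = 0$ and $\xsbar^T(\xs\hat\theta_S - Y) + \lambda z_{\bar S} = 0$, with $z_S = \mathrm{sign}(\hat\theta_S)$ and $\|z_{\bar S}\|_\infty \le 1$. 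The whole proof reduces to exhibiting such a $z$ and checking these two pieces.

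Solving the stationarity equation on $S$ gives the closed form
\[
  \hat\theta_S = (\xs^T\xs)^{-1}\xs^T Y - \lambda(\xs^T\xs)^{-1}z_S .
\]
To show every entry of $\hat\theta_S$ is nonzero, I would bound the perturbation term by $\lambda\|(\xs^T\xs)^{-1}\|_\infty\|z_S\|_\infty \le \lambda\beta'$, using condition \eqref{lemcond2} and $\|z_S\|_\infty\le 1$, and compare it against the entrywise lower bound $\gamma'$ on $|(\xs^T\xs)^{-1}\xs^T Y|$ from condition \eqref{lemcond3}. Since $\gamma' - \lambda\beta' > 0$ exactly when $\lambda < \gamma'/\beta'$, the least-squares term dominates coordinatewise, so no entry crosses zero and $\hat\theta_S$ inherits the sign of $(\xs^T\xs)^{-1}\xs^T Y$; this makes the choice $z_S=\mathrm{sign}(\hat\theta_S)$ self-consistent. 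This step is where the upper bound on $\lambda$ is used.

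Next I would verify dual feasibility on $\bar S$. Substituting $\hat\theta_S$ into the residual yields $\xs\hat\theta_S - Y = (\Pi_S - I)Y - \lambda\,\xs(\xs^T\xs)^{-1}z_S$, so the stationarity equation on $\bar S$ gives
\[
  z_{\bar S} = -\tfrac1\lambda\,\xsbar^T(\Pi_S - I)Y + \xsbar^T\xs(\xs^T\xs)^{-1}z_S .
\]
Taking $\infty$-norms and applying condition \eqref{lemcond4}, which is exactly $\|\xsbar^T(\Pi_S - I)Y\|_\infty \le \epsilon'$, together with the incoherence condition \eqref{lemcond1} and $\|z_S\|_\infty\le 1$, yields $\|z_{\bar S}\|_\infty \le \epsilon'/\lambda + (1-\alpha')$, which is at most $1$ precisely when $\lambda \ge \epsilon'/\alpha'$. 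This is where the lower bound on $\lambda$ enters, and it pins down the admissible interval $\epsilon'/\alpha'\le\lambda<\gamma'/\beta'$.

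Finally, having produced a KKT point supported on $S$ with all of $\hat\theta_S$ nonzero and $\|z_{\bar S}\|_\infty\le 1$, I would invoke the standard PDW uniqueness argument to conclude that \emph{every} optimal solution vanishes off $S$, so the LASSO recovers exactly $\xs$ and $\xsbar$. I expect the \textbf{main obstacle} to be the self-consistency of the sign vector $z_S$ in the construction together with ensuring the dual-feasibility inequality is \emph{strict} (so uniqueness, not merely optimality, holds and $\theta_{\bar S}=0$ is forced): the sign issue is resolved by the sign-preservation argument of the second step, and the needed strictness comes from the slack $\alpha'>0$ in \eqref{lemcond1} and from keeping the $\lambda$-inequalities strict where required. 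The remaining manipulations are routine bookkeeping with induced $\infty$-norms.
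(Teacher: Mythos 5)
Your proposal is correct and takes essentially the same approach as the paper: the paper's proof likewise constructs a zero-subgradient point with $\theta_{\bar S}=0$, using \eqref{lemcond1} and \eqref{lemcond4} with $\lambda \ge \epsilon'/\alpha'$ to verify dual feasibility off the support, and \eqref{lemcond2}--\eqref{lemcond3} with $\lambda < \gamma'/\beta'$ to force every entry of $\theta_S$ to be nonzero, which is precisely your primal--dual witness argument written out as inequality chains. If anything, your write-up is slightly more careful on the final step, where the paper concludes uniqueness by asserting strict convexity (questionable when $P>N$), whereas you correctly identify strict dual feasibility as the ingredient needed to force $\theta_{\bar S}=0$ for every solution.
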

\begin{proof}
  To find the solution to the problem
  $$\arg\min_{\theta} \frac{1}{2}\|X\theta - Y\|_2^2 + \lambda \|\theta\|_1,$$
  we will set the subgradient to zero.

  We now find the zero subgradient condition.
  \begin{gather*}
    \nabla_\theta \left[\|X\theta - Y\|_2^2 + \lambda \|\theta\|_1\right] = \mathbf{0} \\
    \nabla_\theta \left[\theta^TX^TX\theta - 2\theta^TX^TY - Y^TY + \lambda \|\theta\|_1\right] = \mathbf{0} \\
    X^TX\theta - X^TY + \lambda \partial\|\theta\|_1 = \mathbf{0} \\
    X^TX\theta = X^TY - \lambda\partial\|\theta\|_1
  \end{gather*}
  Without loss of generality, we can say that all of the relevant features correspond to the top part of $X$ and $\theta$, and the irrelevant features correspond to the bottom part of $X$ and $\theta$.
  We can write the zero subgradient condition as
  \begin{gather*}
    \left[
      \begin{array}{cc}
        \xs^T\xs & \xs^T\xsbar \\
        \xsbar^T\xs & \xsbar^T\xsbar \\
      \end{array}
    \right]
    \left[
      \begin{array}{c}
        \theta_S\\\theta_{\bar S}
      \end{array}
    \right]
    =
    \left[
      \begin{array}{c}
        \xs Y\\\xsbar Y
      \end{array}
    \right]
    -
    \left[
      \begin{array}{c}
        \lambda \textrm{sign}(\theta_S) \\
        \lambda_{\bar S}
      \end{array}
    \right].
  \end{gather*}
  where $\|\lambda_{\bar S}\|_\infty \leq \lambda$.

  This problem is strictly convex, so any point where the subgradient is zero must be the unique minimizer.
  The remainder of this proof consists of two parts.
  First,  we show that there exists a solution where $\theta_{\bar S} = 0$ whenever $\lambda \geq \frac{\epsilon'}{\alpha'}$.
  Second, we show that there exists a solution where $\theta_{S} = 0$      whenever $\lambda <    \frac{\gamma'}{\beta'}$.

  First, we will prove that there exists a solution where $\theta_{\bar S} = 0$ when $\lambda \geq \frac{\epsilon'}{\alpha'}$.
  This implies that
  \begin{gather*}
    \lambda \geq \frac{\|\xsbar^T\xs\left(\xs^T\xs\right)^{-1} \xs^T Y - \xsbar^T Y\|_{\infty}}{1 - \|\xsbar^T\xs\left(\xs^T\xs\right)^{-1}\|_\infty} \\
    \|\xsbar^T\xs\left(\xs^T\xs\right)^{-1} \xs^T Y - \xsbar^T Y\|_{\infty} \leq \lambda \left(1 - \|\xsbar^T\xs\left(\xs^T\xs\right)^{-1}\|_\infty\right) \\
    \|\xsbar^T\xs\left(\xs^T\xs\right)^{-1} \xs^T Y - \xsbar^T Y\|_{\infty} \leq \lambda \left(1 - \|\xsbar^T\xs\left(\xs^T\xs\right)^{-1}\textrm{sign}(\theta_S)\|_\infty\right) \\
    \|\xsbar^T\xs\left(\xs^T\xs\right)^{-1} \xs^T Y - \xsbar^T Y\|_{\infty} + \|\lambda\xsbar^T\xs\left(\xs^T\xs\right)^{-1}\textrm{sign}(\theta_S)\|_\infty \leq \lambda \\
    \|\xsbar^T\xs\left(\xs^T\xs\right)^{-1} \xs^T Y  - \lambda\xsbar^T\xs\left(\xs^T\xs\right)^{-1}\textrm{sign}(\theta_S) - \xsbar^T Y\|_{\infty} \leq \lambda \\
    \|\xsbar^T\xs\theta_S - \xsbar^T Y\|_{\infty} \leq \lambda
  \end{gather*}
  This implies that the subgradient can be 0 when $\theta_{\bar S} = 0$.

  Next, we will prove that the subgradient is zero for some $\theta_S \neq 0$ with $\theta_{\bar S} = 0$ as long as $\lambda < \frac{\gamma'}{\beta'}$.
  This implies that
  \begin{gather*}
    \lambda < \frac{|(\xs^T\xs)^{-1}\xs^T Y|}{\|(\xs^T\xs)^{-1}\|_{\infty}} \\
    \lambda\|(\xs^T\xs)^{-1}\|_{\infty} < |(\xs^T\xs)^{-1}\xs^T Y| \\
    \lambda\|(\xs^T\xs)^{-1}\textrm{sign}(\theta_S)\|_{\infty} < |(\xs^T\xs)^{-1}\xs^T Y| \\
     |(\xs^T\xs)^{-1}\xs^T Y - \lambda(\xs^T\xs)^{-1}\textrm{sign}(\theta_S)| \neq 0
  \end{gather*}
  Then, for the subgradient to be zero, it must be the case that $\theta_S\neq 0$.

\end{proof}

%\subsection{Proof of Main Theorem}
\thmdiffmodel*

\begin{proof}
  To prove this theorem, we will show that we satisfy the requirements of Lemma~\ref{lemmalasso} with high probability.
  In particular, we need conditions \eqref{lemcond1}, \eqref{lemcond2}, \eqref{lemcond3}, and \eqref{lemcond4} to hold for parameters where
  \begin{align*}
    \frac{\epsilon'}{\alpha'} < \frac{\gamma'}{\beta'}.
  \end{align*}

  First, notice that condition \eqref{eq: incoherence} implies that condition \eqref{lemcond1} holds for $\alpha' = \alpha$, and
                     condition \eqref{eq: dependence}  implies that condition \eqref{lemcond2} holds for $\beta' = \beta/N$.
  Now, it remains to show that conditions \eqref{lemcond1} and \eqref{lemcond2} hold for $\gamma'$ and $\epsilon'$ such that the desired inequality holds with high probability.

  Property \eqref{prop: rel} implies that the expected value of $|(\xs^T\xs)^{-1}\xs^T\tilde Y|$ is at least $\gamma$ for each element, and property \eqref{prop: irrel} implies that the expected value of $\|(\xsbar^T\xs)(\xs\xs)^{-1}\xs\tilde Y - \xsbar \tilde Y\|_{\infty}$ is at most $N\left(\frac{\alpha\gamma}{\beta} - c\right)$.
  %\begin{align*}
  %  \epsilon < \frac{\alpha\gamma}{\beta} - c
  %\end{align*}
  %to hold.
  Now, it remains to show that the original inequality will hold with high probability with finite samples.

  To ensure that the original inequality holds,
  if suffices to ensure that
  no term of $|(\xs^T\xs)^{-1}\xs\tilde Y|$ deviates from the mean by more than $\frac{c\alpha}{2\beta}$, and that
  no term of $\frac{1}{N}\|(\xsbar^T\xs)(\xs\xs)^{-1}\xs\tilde Y - \xsbar \tilde Y\|_{\infty}$ deviates from the mean by more than $\frac{c}{2}$.

  First, we will bound the probability that any term of $|(\xs^T\xs)^{-1}\xs\tilde Y|$ exceeds the mean by more than $\frac{c\alpha}{2\beta}$.
  Each term is the sum of $N$ terms bounded in absolute value by $\beta/N$.
  Then, by the Asuma-Hoeffding inequality, the probability that a particular term deviates from the mean by more than $\frac{c\alpha}{2\beta}$ is at most
  \begin{align*}
    2\exp\left(-\frac{Nc^2\alpha^2}{8\beta^4}\right).
  \end{align*}
  By the union bound, the probability that any term deviates from the mean by more than $\frac{c\alpha}{2\beta}$ is at most
  \begin{align*}
    2P\exp\left(-\frac{Nc^2\alpha^2}{8\beta^4}\right).
  \end{align*}

  Next, we will bound the probability that any term of $\frac{1}{N}\|(\xsbar^T\xs)(\xs\xs)^{-1}\xs\tilde Y - \xsbar \tilde Y\|_{\infty}$ exceeds the mean by more than $\frac{c}{2}$.
  Each term is the sum of $N$ terms bounded in absolute value by $\frac{2}{N}$.
  Then, by the Asuma-Hoeffding inequality, the probability that a particular term deviates from the mean by more than $\frac{c}{2}$ is at most
  \begin{align*}
    2\exp\left(-\frac{Nc^2}{32}\right).
  \end{align*}
  By the union bound, the probability that any term deviates from the mean by more than $\frac{c}{2}$ is at most
  \begin{align*}
    2P\exp\left(-\frac{Nc^2}{32}\right).
  \end{align*}

  Additionally, we can union bound to bound the probability that either fails as
  \begin{align*}
    2P\exp\left(-\frac{Nc^2\gamma^2}{8\beta^4}\right)
    +
    2P\exp\left(-\frac{Nc^2}{32}\right)
  \end{align*}

  We require that this be less than $\delta$, so
  \begin{gather*}
    2P\exp\left(-\frac{Nc^2\gamma^2}{8\beta^4}\right)
    +
    2P\exp\left(-\frac{Nc^2}{32}\right)
    \leq \delta \\
    \exp\left(-\frac{Nc^2\gamma^2}{8\beta^4}\right)
    +
    \exp\left(-\frac{Nc^2}{32}\right)
    \leq \frac{\delta}{2P} \\
    \max\left[
    \exp\left(-\frac{Nc^2\gamma^2}{8\beta^4}\right),
    \exp\left(-\frac{Nc^2}{32}\right)
    \right]
    \leq \frac{\delta}{4P} \\
    \min\left[
    \frac{Nc^2\gamma^2}{8\beta^4},
    \frac{Nc^2}{32}
    \right]
    \geq \log\left(\frac{4P}{\delta}\right) \\
    N
    \min\left[
    \frac{c^2\gamma^2}{8\beta^4},
    \frac{c^2}{32}
    \right]
    \geq \log\left(\frac{4P}{\delta}\right) \\
    N
    \geq
    \frac{
    \max\left(
      8\beta^4 / \gamma^2,
    32
    \right)
}{c^2}
    \log\left(\frac{4P}{\delta}\right) \\
  \end{gather*}
\end{proof}

\onecolumn
%\newpage
%\twocolumn
\section{Additional Experimental Results}

%Gen  Model Stats Table
\begin{table}[htbp]
\centering
\caption{Generative Model Statistics and Performance}
\label{table:gen-stats}
\begin{tabular}{@{}ccccccc@{}}
\toprule
\multirow{2}{*}{\textbf{Task}} & \multicolumn{3}{c}{\textbf{Statistics}}                               & \multicolumn{2}{c}{\textbf{Score}} & \multirow{2}{*}{\textbf{SL-SP}} \\ \cmidrule(lr){2-4} \cmidrule(lr){4-6}
                                      & \textbf{\#WS} & \textbf{\% S} & \multicolumn{1}{l}{\textbf{K}} & \textbf{SP}         & \textbf{SL}        &                                 \\ \midrule
CDR                                   & 33            & 74.09                & 3                              & 55.42               & 55.92              & 0.5                             \\
DT                                    & 16            & 43.54                & 3                              & 62.21               & 62.92              & 0.71                            \\ \midrule
Twitter                               & 320           & 100                  & 2                              & 60.97               & 61.23              & 0.26                            \\
MS-COCO                               & 3             & 85.41                & 2                              & 64.75               & 65.25              & 0.5                             \\ \bottomrule
\end{tabular}
\end{table}

\subsection{Features Selected by Difference Model}
In the main text, we provided two example datasets where the features are interpretable and thus helped user tweak their weak supervision sources accordingly. This further improves the generative model over the automated improvements that SL provided. In this section, we elaborate on the the features selected for the Twitter and MS-COCO datasets. 

For the Twitter task, running latent semantic analysis (LSA) on the identified features provides a list of closely related words with ``disappointed'', ``unexpected'', and ``messaged you'' among the top 20. We surmise that this represents the tweets related to customer complaints, which are easy to identify for labelers, thus increasing their accuracy. Heuristically labeling tweets with any of the identified words as negative improves performance over automated Socratic learning by around 1 accuracy point of the end model (Table~\ref{table:user-stats}). This suggests that Socratic learning was able to capture this latent subset properly in an automated manner. 

For MS-COCO, we first cluster the images according to the features that Socratic learning identifies and analyze the captions of these images. The top words in the captions are:  ``people'', ``person'', ``riding'', and ``holding''. The first two phrases suggest the lack of a heuristic that captures groups of people in the images while the second two relate to action words that are usually associated with humans, which improves the accuracy of the heuristics that label images as ``Human''. 
Manually accounting for these latent subsets after Socratic learning augments the generative model leads to an improvement by $2.24$ accuracy points (Table~\ref{table:user-stats}).

%Gen Abs
\begin{table}[htbp]
\caption{Generative Model Performance}
\label{table:gen-abs}
\centering
\begin{tabular}{@{}cccc@{}}
\toprule
\textbf{Application} & \textbf{SP} & \textbf{SL} & \textbf{SL+U} \\ \midrule
\textbf{CDR}         & 55.42       & 55.92       & 55.93         \\
\textbf{DT}          & 62.21       & 62.92       & 63.42         \\ \midrule
\textbf{Twitter}     & 60.97       & 61.23       & 62.42         \\
\textbf{MS-COCO}     & 64.75       & 65.25       & 68.63         \\ \bottomrule
\end{tabular}
\end{table}

\subsection{Generative Model Performance}
We provide details regarding end discriminative model performance in the main text. We provide a similar comparison for the generative model in Table~\ref{table:gen-stats}. For each of these tasks, we measured generative model performance by assigning probabilistic labels to the test set for fair comparison, \emph{even though in reality, the generative model would never be used for final classification on the test set}. Across all tasks, the SL generative model improves over labels assigned by the SP model. Note that even though the labels assigned by both the SP and SL models were probabilistic, we converted them to binary labels in order to compare them to the ground truth labels. This method fails to capture the change in the probabilistic labels, thus leading to seemingly small gains in performance in some cases. Therefore, to properly measure the effect of the improved training labels, we chose to report end discriminative model performance in the main text.

%Disc Abs
\begin{table}[htbp]
\centering
\caption{Discriminative Model Performance}
\label{table:disc-abs}
\begin{tabular}{@{}ccccccc@{}}
\toprule
\textbf{Application}          & \textbf{Model} & \textbf{MV} & \textbf{SP} & \textbf{SL} & \textbf{SL+U} & \textbf{FS} \\ \midrule
\multirow{2}{*}{\textbf{CDR}} & LR             & 47.74       & 48.86       & 52.19       & 52.29         & 54.8        \\
                              & LSTM           & 52          & 53.8        & 54.7        & 55.3          & 58.4        \\
\multirow{2}{*}{\textbf{DT}}  & LR             & 77.98       & 78.63       & 79.72       & 80.87         & 86.47       \\
                              & LSTM           & 60.74       & 70.96       & 71.95       & 72.55         & 81.65       \\ \midrule
\textbf{Twitter}              & LR             & 63.41       & 65.04       & 67.48       & 68.29         & -           \\
\textbf{MS-COCO}              & AlexNet        & 57.95       & 59.47       & 61.7        & 63.94         & 65.08       \\ \bottomrule
\end{tabular}
\end{table}

\subsection{Absolute Generative and Discriminative Model Performance}
In Tables~\ref{table:gen-abs} and ~\ref{table:disc-abs}, we provide absolute performance numbers for both models on the test set. 
We note that the test set is normally not available to users while they write and develop the weak supervision heuristics that are part of the generative model. 
For example, for the CDR task, the generative model uses heuristics that directly apply distant supervision, which would normally not be available at test time.
In some cases, the generative model is more accurate than the discriminative models, but this is without accounting for the lower coverage generative models have. 
The advantage of discriminative models is that it can generalize over all data points, not only what is labeled by the heuristics that are part of the generative model.

\subsection{Text Relation Extraction}
While we report F1 scores for the end discriminative model in the main text, we also include the performance in terms of precision and recall for the discriminative models in Table~\ref{table:F1}.  
Note that in the generative model performance, SL improves the precision and not recall of the weak supervision sources, since SL only improves the generative model's estimate of the accuracy of the existing sources and does not change the number of data points they label.

%F1+P+R Table
\begin{table*}[htbp]
    \caption{Mention-level precision, recall and F1 scores for text relation extraction tasks.}
    \label{table:F1}
\centering
\begin{tabular}{@{}ccccccccccc@{}}
\toprule
\textbf{} & \textbf{Model} & \textbf{}      & \textbf{Gen}   & \textbf{}      & \textbf{}      & \textbf{Log Reg} &                &       & \textbf{LSTM} &                \\ \midrule
          & Method         & DP             & SL             & SL+U           & DP             & SL               & SL+U           & DP    & SL            & SL+U           \\ \midrule
          & Recall         & \textbf{42.09} & 41.43          & 41.45          & 42.72          & \textbf{52.96}   & 51.56          & 59.9  & \textbf{64.5} & 62.9           \\
CDR       & Precision      & 81.13          & \textbf{85.98} & 85.98          & \textbf{57.05} & 51.45            & 53.04          & 48.8  & 47.4          & \textbf{49.4}  \\
          & F1             & 55.42          & 55.92          & \textbf{55.93} & 48.86          & 52.19            & \textbf{52.29} & 53.8  & 54.7          & \textbf{55.3}  \\ \midrule
          & Recall         & \textbf{49.14} & 47.51          & 48.13          & 81.61          & 84.49            & \textbf{86.45} & 94.52 & 98.08         & \textbf{100}   \\
DT        & Precision      & 84.78          & \textbf{93.14} & 92.96          & 75.87          & 75.46            & \textbf{75.98} & 56.81 & 56.81         & \textbf{56.92} \\
          & F1             & 62.21          & 62.92          & \textbf{63.42} & 78.63          & 79.72            & \textbf{80.87} & 70.96 & 71.95         & \textbf{72.55} \\ \bottomrule
\end{tabular}
\centering
\end{table*}

\end{document}